%%%%%%% ICML 2018 EXAMPLE LATEX SUBMISSION FILE %%%%%%%%%%%%%%%%%

\documentclass{article}

% Recommended, but optional, packages for figures and better typesetting:
\usepackage{microtype}
\usepackage{graphicx}
\usepackage{subfigure}
\usepackage{booktabs} % for professional tables

% use Times
\usepackage{times}
% For figures
\usepackage{graphicx} % more modern
\usepackage{subfigure} 

% For citations
\usepackage{natbib}

% For algorithms
\usepackage{algorithm}
\usepackage{algorithmic}

\usepackage{dsfont}
\usepackage{amsmath}
\usepackage{amsfonts}
\usepackage{amssymb}
\usepackage{amsthm}
\usepackage{subfigure}
\usepackage{epsfig}
\usepackage{color}
\usepackage{enumerate}
\usepackage{placeins}

% use Times
\usepackage{times}
% For figures
\usepackage{graphicx} % more modern
\usepackage{subfigure} 

% For citations
%\usepackage[numbers]{natbib}

% For algorithms
% For algorithms
\usepackage{algorithm}
\usepackage{algorithmic}

\usepackage[dvipsnames]{xcolor}

\usepackage{hyperref}

% As of 2011, we use the hyperref package to produce hyperlinks in the
% resulting PDF.  If this breaks your system, please commend out the
% following usepackage line and replace \usepackage{icml2017} with
% \usepackage[nohyperref]{icml2017} above.
\usepackage{hyperref}

% Packages hyperref and algorithmic misbehave sometimes.  We can fix
% this with the following command.

\newtheorem{lemma}{Lemma}
\newtheorem{theorem}{Theorem}

\newtheorem{definition}{Definition}
\newtheorem{assumption}{Assumption}
\newtheorem{remark}{Remark}

\theoremstyle{definition}

%\newcommand{\real}{{\rm I\!R}}

%\newcommand{\integer}{{\rm I\!N}}

%\newcommand{\expectation}{{\rm I\!E}}

%\Declare\Delta_{\Y}athOperator*{\diameter}{diam}

%\newcommand{\ind}[1]{\mathbf{1}\left\{#1\right\}}
%\newcommand{\ind}[1]{\mathbf{1}_{\left\{#1\right\}}}

%\usepackage[]{icml2018}
%\usepackage[accepted]{icml_arxiv}
\usepackage[accepted]{icml2018}

% The \icmltitle you define below is probably too long as a header.
% Therefore, a short form for the running title is supplied here:
\icmltitlerunning{Quickshift++: Provably Good Initializations for Sample-Based Mean Shift}

\begin{document}

\twocolumn[
\icmltitle{Quickshift++: Provably Good Initializations for Sample-Based Mean Shift}

% It is OKAY to include author information, even for blind
% submissions: the style file will automatically remove it for you
% unless you've provided the [accepted] option to the icml2017
% package.

% list of affiliations. the first argument should be a (short)
% identifier you will use later to specify author affiliations
% Academic affiliations should list Department, University, City, Region, Country
% Industry affiliations should list Company, City, Region, Country

% you can specify symbols, otherwise they are numbered in order
% ideally, you should not use this facility. affiliations will be numbered
% in order of appearance and this is the preferred way.
%\icmlsetsymbol{equal}{*}

\begin{icmlauthorlist}
\icmlauthor{Heinrich Jiang}{google}
\icmlauthor{Jennifer Jang}{uber}
\icmlauthor{Samory Kpotufe}{princeton}
\end{icmlauthorlist}

\icmlaffiliation{google}{Google Research, Mountain View, CA}
\icmlaffiliation{uber}{Uber Inc, San Francisco, CA}
\icmlaffiliation{princeton}{Princeton University, Princeton, NJ}

\icmlcorrespondingauthor{Heinrich Jiang}{heinrich.jiang@gmail.com}
%\icmlcorrespondingauthor{Jennifer Jang }{j.jang42@gmail.com}
%\icmlcorrespondingauthor{Samory Kpotufe}{samory@princeton.edu}

% You may provide any keywords that you 
% find helpful for describing your paper; these are used to populate 
% the "keywords" metadata in the PDF but will not be shown in the document
\icmlkeywords{}

\vskip 0.3in
]

% this must go after the closing bracket ] following \twocolumn[ ...

% This command actually creates the footnote in the first column
% listing the affiliations and the copyright notice.
% The command takes one argument, which is text to display at the start of the footnote.
% The \icmlEqualContribution command is standard text for equal contribution.
% Remove it (just {}) if you do not need this facility.

\printAffiliationsAndNotice{}  % leave blank if no need to mention equal contribution
%\printAffiliationsAndNotice{\icmlEqualContribution} % otherwise use the standard text.

\begin{abstract} 
We provide initial seedings to the Quick Shift clustering algorithm, which approximate the locally high-density regions of the data. Such seedings act as more {\it stable} and {\it expressive} cluster-cores than the singleton modes found by Quick Shift. We establish statistical consistency guarantees for this modification. We then show strong clustering performance on real datasets as well as promising applications to image segmentation.
\end{abstract} 

\section{Introduction}\label{section:intro}

Quick Shift \cite{vedaldi2008quick} is a mode-seeking based clustering algorithm that has a growing popularity in computer vision.
It proceeds by repeatedly moving each sample to its closest sample point  that has higher empirical density if one exists within a $\tau$-radius ball, otherwise we stop. Thus each path ends at a point which can be  viewed as a local mode of the empirical density. Then, points that end up at the same mode are assigned to the same cluster.
The most popular choice of empirical density function is the Kernel Density Estimator (KDE) with Gaussian Kernel. The algorithm also appears in \citet{rodriguez2014clustering}.

Quick Shift was designed as a faster alternative to the well-known Mean Shift algorithm \cite{cheng1995mean, comaniciu2002mean}.
Mean Shift is equivalent to performing a gradient ascent of the KDE starting at each sample until convergence \cite{arias2016estimation}. Samples that
correspond to the same points of convergence are in the same cluster and the points of convergence are taken to be the estimates of the modes.
Thus, both procedures hill-climb to the local modes of the empirical density function and cluster based on these modes. 
The key differences are that Quick Shift restricts the steps to sample points (and thus is a sample-based version of Mean Shift) and has the extra $\tau$ parameter which allows it to merge close segments together.

One of the drawbacks of these two procedures, as well as many mode-seeking based clustering algorithms, is that the point-modes of the density functions are often poor representations of the clusters. This will happen when the high-density regions within a cluster are of arbitrary shape and have some variations causing the underlying density function to have possibly many apparent, but not so salient modes. In this case, such procedures asymptotically recover all of the modes separately, leading to over-segmentation. To combat this effect, practitioners often increase the kernel bandwidth, which makes the density estimate more smooth. However, this can cause the density estimate to deviate too far from the original density we are intending to cluster based on.\footnote{KDE with Gaussian kernel and bandwidth $h$ approximates the underlying density convolved with a Gaussian with mean $0$ and covariance $h^2{\bf I}$. Thus, the higher $h$ is, the more the KDE deviates from the original density.} Thus, practitioners may not wish to identify the clusters based on the point-modes of the density function, but rather identify them based on {\it locally high density regions} of the dataset (See Figure~\ref{modesvsclustercore}).\footnote{Over-segmentation is also dealt with in Quick Shift via the $\tau$ parameter, but a threshold for the distance between two modes which should be clustered together is hard to determine in practice. Moreover, there may not even be a good setting of $\tau$ which works everywhere in the input space.}

We propose modeling these locally high-density regions as {\it cluster-cores} (to be precisely defined later),
which can be of arbitrary shape, size, and density level, and are thus better suited at capturing the possibly complex topological properties of clusters that can arise in practice. In other words, these cluster-cores are better at expressing the clusters and are more stable as they are less sensitive to the small fluctuations that can arise in the empirical density function.
We parameterize the cluster-core by $\beta$ where $0 < \beta < 1$, which determines how much the density is allowed to vary within the cluster-core. We estimate them from a finite sample using a minor modification of the MCores algorithm of \citet{jiang2017modal}.

We introduce Quickshift++, which first estimates these cluster-cores, and then runs the Quick Shift based hill-climbing procedure on each remaining sample until it reaches a cluster-core. Samples that end up in the same cluster-core are assigned to the same cluster; thus, the cluster-cores can be seen as representing the high-confidence regions within each cluster. We utilize the $k$-NN density estimator as our empirical density.

Despite the simplicity of our approach, we show that Quickshift++ considerably outperforms the popular density-based clustering algorithms, while being efficient. Another desirable property of Quickshift++ is that it is simple to tune its two hyperparameters $\beta$ and $k$.\footnote{The $\tau$ parameter from Quick Shift is unnecessary here because we climb until we reach a cluster-core as our stopping condition.}
 We show that a few settings of $\beta$ turn out to work for a wide range of applications and that the procedure is stable in choices of $k$.
 
We then give a novel statistical consistency analysis for Quickshift++ which provides guarantees that points within a cluster-core's attraction regions (to be described later) are correctly assigned. 
We also show promising results on image segmentation, which further validates the desirability of using cluster-cores on real-data applications. 

\vspace{-0.4cm}
\begin{figure}[H]
\label{modesvsclustercore}
\begin{center}
\includegraphics[width=0.23\textwidth]{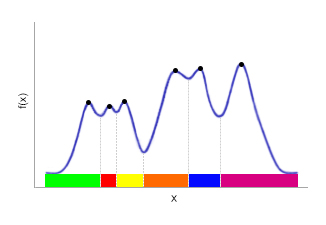}
\includegraphics[width=0.23\textwidth]{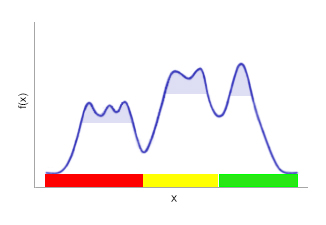}
\end{center}

\caption{
It can often be the case that the locally high-density regions are of arbitrary shape and fluctuations within them lead to many apparent modes. {\bf Left}: Mode-seeking clustering procedures often lead to over-segmentation. {\bf Right}: It may be more desirable to use cluster-cores (shaded), which allows fluctuations within arbitrarily-shaped regions of locally high density.}
\end{figure}

%\begin{algorithm}[tbh]
%   \caption{Quick Shift}
%   \label{alg:quickshift}
%\begin{algorithmic}
%	\STATE Input: Samples $X_{[n]} := \{x_1,...,x_n\}$, empirical density function $\widehat{f}$, segmentation parameter $\tau > 0$.
%	\STATE Initialize directed graph $G$ with vertices $\{x_1,...,x_n\}$ and no edges.
%	\FOR{$i = 1$ to $n$}
%	\IF{there exists $x \in X_{[n]}$ such that $\widehat{f}(x) > \widehat{f}(x_i)$ and $||x-x_i|| \le \tau$}
%	\STATE Add to $G$ a directed edge from $x_i$ to $\text{argmin}_{x_j \in X_{[n]} : \widehat{f}(x_j) > \widehat{f}(x_i)} ||x_i - x_j||$.
%	\ENDIF
%	\ENDFOR
%	\STATE {\bf return} $G$.
%\end{algorithmic}
%\end{algorithm}

\section{Related Works and Contributions}

We show that Quickshift++ is a new and powerful addition to the family of clustering procedures known as {\it density-based} clustering, which most notably includes DBSCAN \cite{ester1996density} and Mean Shift \cite{cheng1995mean}.
Such procedures operate on the estimated density function based on a finite sample to recover structures in the density function that
ultimately correspond to the clusters. There are several advantages of density-based clustering over classical objective-based procedures such as k-means and spectral clustering. Density-based procedures can {\it automatically} detect the number of clusters, while objective-based procedures typically require this as an input. Density-based clustering algorithms also make little assumptions on the shapes of the clusters as well as their relative positions.

Density-based clustering procedures can roughly be classified into two categories: hill-climbing based approaches (discussed earlier, which includes both Mean Shift and Quick Shift) and density-level set based approaches. We now discuss the latter approach, which takes the connected components of the density-level set defined by $\{ x : f(x) \ge \lambda\}$ for some density level $\lambda$ as the clusters. This statistical notion of clustering traces back to \citet{hartigan1975clustering}. 
Since then, there has been extensive work done, e.g. \citet{tsybakov1997nonparametric,cadre2006kernel,rigollet2009optimal,singh2009adaptive,chaudhuri2010rates,rinaldo2010generalized,kpotufe2011pruning,balakrishnan2013cluster,chaudhuri2014consistent,chen2017density}. More recently, \citet{sriperumbudur2012consistency,jiang2017density} show that the popular DBSCAN algorithm turns out to converge to these clusters. However, one of the main drawbacks of this approach is that the density-level $\lambda$ is fixed and thus such methods perform poorly when the clusters are at different density-levels. Moreover, the question of how to choose $\lambda$ remains (e.g. \citet{steinwart2011adaptive}).

\citet{jiang2017modal} provide an alternative notion of clusters, called modal-sets, which are regions of flat density which are local maximas of the density. They can be of arbitrary shape, dimension, or density.
They provide a procedure, MCores, which estimates these with consistency guarantees.
Our notion of cluster-core is similar to modal-sets, but the density within a cluster-core is allowed to vary by a substantial amount in order to capture such variations seen in real data as a the flat density of modal-sets may be too restrictive in practice. It turns out that a small modification of MCores allows us to estimate these cluster-cores. Thus Quickshift++ has the advantage over DBSCAN in that clusters can be at any density level and that furthermore, the density levels are chosen adaptively.

Mcores however consists of an over-simplistic final clustering: it simply assigns each point to its closest modal-set, while in practice, clusters tend not to follow the geometry induced by the Euclidean metric. Quickshift++ on the other hand clusters the remaining points by a hill-climbing method which we show is far better in practice.

Thus, Quickshift++ combines the strengths of both density-based clustering approaches while avoiding many of their weaknesses. In addition to the general advantages of density-based clustering algorithms shared by both approaches, it is also able to both (1) recover clusters at varying density levels and
(2) not suffer from the over-segmentation issue described in Figure~\ref{modesvsclustercore}. To our knowledge, no other procedure has been shown to have this property.

For our theoretical analysis, we give guarantees about Quickshift++'s ability to recover the clusters based on attraction regions defined by the gradient flows. \citet{wasserman2014feature,arias2016estimation} showed that Mean Shift's iterates approximate the gradient flows. Some progress has been made in understanding Quick Shift \cite{jiang2017on,verdinelli2018analysis}. There are also related lines of work in mode clustering e.g. \citep{li2007nonparametric,chacon2012clusters,genovese2016non,chen2016comprehensive}. In this paper, we show that Quickshift++ recovers the interior of its attraction region, thus adding to our statistical understanding of hill-climbing based clustering procedures.

%\section{Overview}\label{section:overview}
%\input{Overview}

\section{Algorithm}\label{section:algorithm}
\subsection{Basic Definitions}

Let $X_{[n]} = \{x_1,...,x_n\}$ be $n$ i.i.d. samples drawn from an unknown density $f$, defined over the Lebesgue measure on $\mathbb{R}^d$. Suppose that $f$ has compact support $\mathcal{X}$.

Our procedure will operate on the $k$-NN density estimator:

\begin{definition} \label{kNNdensity} Let $r_k(x) := \inf \{ r > 0 : |B(x, r) \cap X_{[n]}| \ge k \}$, i.e., the distance from $x$ to its $k$-th nearest neighbor. Define the $k$-NN density estimator as
\begin{align*}
f_k(x) := \frac{k}{n\cdot v_d\cdot r_k(x)^d},
\end{align*}
where $v_d$ is the volume of a unit ball in $\mathbb{R}^d$.
\end{definition}

\subsection{Cluster-Cores}

We define the cluster core with respect to fixed fluctuation parameter $\beta$ as follows.
 \begin{definition}\label{def:clustercore} 
 Let $0 < \beta < 1$. 
Closed and connected set $M \subset \mathcal{X}$ is a cluster-core if $M$ is a connected component (CC) of $\{ x \in \mathcal{X} : f(x) \ge (1 - \beta)\cdot \max_{x' \in M} f(x') \}$.
\end{definition}
Note that when $\beta \rightarrow 0$, then the cluster-cores become the modes or local-maximas of $f$. When $\beta \rightarrow 1$, then the cluster-core becomes the entire support $\mathcal{X}$. We next give a very basic fact about cluster-cores, that they do not overlap.
\begin{lemma}
Suppose that $M_1$, $M_2$ are distinct cluster-cores of $f$. Then $M_1 \cap M_2 = \emptyset$.
\end{lemma}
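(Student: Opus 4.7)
The plan is to argue by contradiction. Suppose $M_1$ and $M_2$ are distinct cluster-cores with $M_1 \cap M_2 \neq \emptyset$. Write $m_i := \max_{x' \in M_i} f(x')$ and $\lambda_i := (1-\beta) m_i$, so that each $M_i$ is a connected component of the super-level set $L_i := \{x \in \X : f(x) \ge \lambda_i\}$. Without loss of generality assume $\lambda_1 \le \lambda_2$, which gives the set inclusion $L_2 \subseteq L_1$.

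The first key step is to show $M_2 \subseteq M_1$. Since every point of $M_2$ satisfies $f(x) \ge \lambda_2 \ge \lambda_1$, we have $M_2 \subseteq L_1$. Now $M_2$ is connected (by definition of a cluster-core), it lies in $L_1$, and by hypothesis it intersects $M_1$, which is the connected component of $L_1$ containing any shared point. By maximality of connected components, any connected subset of $L_1$ that meets $M_1$ must be contained in $M_1$, giving $M_2 \subseteq M_1$.

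The second step is to deduce $\lambda_1 = \lambda_2$. From $M_2 \subseteq M_1$, the maximum over $M_2$ cannot exceed the maximum over $M_1$, so $m_2 \le m_1$ and hence $\lambda_2 \le \lambda_1$. Combined with the earlier inequality $\lambda_1 \le \lambda_2$ we get $\lambda_1 = \lambda_2$, and therefore $L_1 = L_2$. Now $M_1$ and $M_2$ are both connected components of the \emph{same} set $L_1$ and they share a point, so by the disjointness of distinct connected components they must coincide, contradicting that they are distinct cluster-cores.

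I do not expect any single step to be a serious obstacle; the only subtlety is recognizing that a cluster-core's defining level $\lambda_i$ depends on $M_i$ itself, so one cannot directly compare the two sets as level sets of a common threshold until one exploits the inclusion $M_2 \subseteq M_1$ to force the two thresholds to agree. Once that observation is in place, the standard fact that distinct connected components of a fixed set are disjoint closes the argument.
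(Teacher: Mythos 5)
Your proof is correct and follows essentially the same route as the paper's: both use the nesting of connected components of super-level sets to conclude $M_2 \subseteq M_1$, then compare the suprema to force $\lambda_1 = \lambda_2$ and derive a contradiction. The only difference is organizational — the paper splits into the cases $\lambda_1 = \lambda_2$ and $\lambda_1 < \lambda_2$, whereas you assume $\lambda_1 \le \lambda_2$ throughout and derive equality at the end — which is immaterial.
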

\begin{proof}
Suppose otherwise. We have that $M_1$ and $M_2$ are CCs of $\{ x \in \mathcal{X} : f(x) \ge \lambda_1\}$ and $\{ x \in \mathcal{X} : f(x) \ge \lambda_2\}$, respectively for some $\lambda_1, \lambda_2$. Clearly, if $\lambda_1 = \lambda_2$, then it follows that $M_1 = M_2$. Then, without loss of generality, let $\lambda_1 < \lambda_2$. Then since the CCs of $\{ x \in \mathcal{X} : f(x) \ge \lambda_2\}$ are nested in the CCs of $\{ x \in \mathcal{X} : f(x) \ge \lambda_1\}$, then it follows that $M_2 \subseteq M_1$. Then, $\lambda_2 = (1 - \beta) \sup_{x \in M_2} f(x) \le (1 - \beta) \sup_{x \in M_1} f(x) = \lambda_1$, a contradiction. As desired.
\end{proof}

Algorithm~\ref{alg:clustercore} is a simple modification of MCores by \citet{jiang2017modal}. The difference is that we use a multiplicative fluctuation parameter $\beta$, while \citet{jiang2017modal} uses an additive one. The latter requires knowledge of the scale of the density function, which is difficult to determine in practice. Moreover, the multiplicative fluctuation adapts to clusters at varying density levels more reasonably than a fixed additive fluctuation. It uses the levels of the mutual $k$-NN graph of the sample points, defined below.
\begin{definition}
 Let $G(\lambda)$ denote the $\lambda$-level of the mutual $k$-NN graph with vertices $\{ x \in X_{[n]} : f_k(x) \ge \lambda\}$ and an edge between $x$ and $x'$ iff $||x - x'|| \le \min \{r_k(x), r_k(x') \}$. 
 \end{definition}
 It is known that $G(\lambda)$ approximates the CCs of the $\lambda$-level sets of the true density, defined as $\{ x \in \mathcal{X} : f(x) \ge \lambda\}$ see e.g. \cite{chaudhuri2010rates}. Moreover, it can be seen that the CCs of $G(\lambda)$ forms a hierarchical nesting structure as $\lambda$ decreases. 

 Algorithm~\ref{alg:clustercore} proceeds by performing a top-down sweep of the levels of the mutual $k$-NN graph, $G(\lambda)$. As $\lambda$ decreases, it is clear that more nodes appear and that connectivity increases. In other words, as we scan top-down, the CCs of $G(\lambda)$ become larger, some CCs can merge, or new CCs can appear. When a new CC appears at level $\lambda$, then intuitively, it should correspond to a local maxima of $f$, which appears at a density level approximately $\lambda$. This follows from the fact that the CCs of $G(\lambda)$ approximates the CCs of $\{x \in \mathcal{X} : f(x) \ge \lambda\}$. Thus, the idea is that when a new CC appears in $G(\lambda)$, then we can take the corresponding CC in $G(\lambda - \beta \lambda)$ (which is the density level $(1-\beta)$ times that of the highest point in the CC) to estimate the cluster-core. 

%\begin{figure}
\begin{algorithm}[H]
   \caption{MCores (estimating cluster-cores)}
   \label{alg:clustercore}
\begin{algorithmic}
   \STATE Parameters $k$, $\beta$
   \STATE Initialize $\widehat{\mathcal{M}}:= \emptyset$.
   \STATE Sort the $x_i$'s in decreasing order of $f_k$ values (i.e. $f_k(x_i) \geq f_k(x_{i+1})$). 
   \FOR{$i=1$ {\bfseries to} $n$}
   \STATE Define $\lambda := f_k(x_i)$.
   \STATE Let $A$ be the CC of $G(\lambda - \beta \lambda)$ containing $x_i$. 
   \IF{$A$ is disjoint from all cluster-cores in $\widehat{\mathcal{M}}$}
   \STATE Add $A$ %$\{x \in A : f_k(x) \ge \lambda - \beta \lambda\}$ 
   to $\widehat{\mathcal{M}}$. 
    \ENDIF
   \ENDFOR
   \STATE \textbf{return} $\widehat{\mathcal{M}}$.
\end{algorithmic}
\end{algorithm}

%\end{figure}
\vspace{-0.5cm}
\begin{algorithm}[H]
   \caption{Quickshift++}
   \label{alg:quickshiftpp}
\begin{algorithmic}
   \STATE Let $\widehat{\mathcal{M}}$ be the cluster-cores obtained by running Algorithm~\ref{alg:clustercore}.
   \STATE Initialize directed graph $G$ with vertices $\{x_1,...,x_n\}$ and no edges.
   \FOR{$i=1$ {\bfseries to} $n$}
   	\STATE If $x_i$ is not in any cluster-core, then add to $G$ an edge from $x_i$ to its closest sample $x \in X_{[n]}$ such that $f_k(x) > f_k(x_i)$.
   \ENDFOR
   \STATE For each cluster-core $M \in \widehat{\mathcal{M}}$, let $\widehat{\mathcal{C}}_M$ be the points $x \in X_{[n]}$ such that the directed path in $G$ starting at $x$ ends in $M$.
   \STATE \textbf{return} $\{\widehat{\mathcal{C}}_M : M \in  \widehat{\mathcal{M}}\}$.
\end{algorithmic}
\end{algorithm}

\subsection{Quickshift++}

Quickshift++ (Algorithm~\ref{alg:quickshiftpp}) proceeds by first running Algorithm~\ref{alg:clustercore} to obtain the cluster-cores, and then moving each sample point to its nearest neighbor that has higher $k$-NN density until it reaches some cluster-core. All samples that end up in the same cluster-core after the hill-climbing are assigned to the same cluster. Note that since the highest empirical density sample point is contained in a cluster-core, it follows that each sample point not in a cluster-core will eventually be assigned to a unique cluster-core. Thus, Quickshift++ provides a clustering assignment of {\it every} sample point.

%\begin{figure}[H]
%\begin{center}
%\includegraphics[width=0.22\textwidth]{quickshift_kernel3}
%\includegraphics[width=0.22\textwidth]{quickshift_kernel6}\\
%\includegraphics[width=0.22\textwidth]{quickshift_kernel9}
%\includegraphics[width=0.22\textwidth]{quickshift_k60}
%\end{center}
%\caption{Simulation of three rings with additive Gaussian noise. This is a situation where the densest regions are better modeled as arbitrary shaped cluster-cores that resemble the underlying rings rather than point-modes. There are no hyperparameter settings of Quickshift (kernel bandwidth of $3$, $6$, $9$ for top-left, top-right, bottom-left displayed resp.) that properly 
%clusters the three rings, while Quickshift++ (with setting $k = 60, \beta = 0.9$) is able to.}
%\end{figure}

\begin{remark}
Although it seems a similar procedure could have been constructed by using Mean Shift in place of Quick Shift, Mean Shift could have convergence outside of the estimated cluster-cores, while Quick Shift guarantees that each sample outside of a cluster-core get assigned to some cluster-core.
\end{remark}

\subsection{Implementation}
The implementation details for the MCores modification can be inferred from \citet{jiang2017modal}.
This step runs in $O(nk\cdot \alpha(n))$ where $\alpha$ is the Inverse Ackermann function \cite{cormen2009introduction}, in addition to the time it takes to compute the $k$-NN sets for the $n$ sample points. To cluster the remaining points, for each sample not in a cluster-core, we must find its nearest sample of higher $k$-NN density. Although this is worst-case $O(n)$ time for each sample point, fortunately we see that in practice (as long as $k$ is not too small) for the vast majority of cases, the nearest sample with higher density is within the $k$-nearest neighbor set so it only takes $O(k)$ in most cases. It is an open problem whether there the nearest sample with higher empirical density is often in its $k$-NN set. Code release is at \url{https://github.com/
google/quickshift}.

\section{Theoretical Analysis}\label{section:theory}
For the theoretical analysis, we make first the following regularity assumption, that the density is continuously differentiable and lower bounded on $\mathcal{X}$.

\begin{assumption}\label{assumption:diff}
$f$ is continuously differentiable on $\mathcal{X}$ and there exists $\lambda_0 > 0$ such that $\inf_{x \in \mathcal{X}} f(x) \ge \lambda_0$.
\end{assumption}

Let $M_1,...,M_C$ be the cluster-cores of $f$. Then we can define the following notion of attraction region for each cluster-core based on the gradient ascent curve or flow. This is similar to notions of attraction regions for some previous analyses of mode-based clustering, such as \citet{wasserman2014feature, arias2016estimation}, where the intuition is that attraction regions are defined based by following the direction of the gradient of the underlying density. In our situation, instead of an attraction region defined as all points which flow towards a particular point-mode, the attraction region is defined around a cluster-core.

\begin{definition} [Attraction Regions]
Let path $\pi_x : \mathbb{R} \rightarrow \mathbb{R}^d$ satisfy $\pi_x(0) = x$, $\pi_x'(t) = \nabla f(\pi_x(t))$. 
For cluster-core $M_i$, its attraction region $\mathcal{A}_i$ is the set of points $x \in \mathcal{X}$ that satisfy $\lim_{t\rightarrow \infty}\pi_x(t) \in M_i$.
\end{definition}
It is clear that these attraction regions are well-defined. The flow path is well-defined since the density is differentiable and since each cluster-core is defined as a CC of a level set, the density must decay around its boundaries. In other words, once an ascent path reaches a cluster-core, it cannot leave the cluster-core. 

However, it is in general not the case that the space can be partitioned into attraction regions. For example, if a flow reaches a saddle point, it will get stuck there and thus any point whose flow ends up at a saddle point will not belong to any attraction region. In this paper, we only give guarantees about the clustering of points which are in an attraction region. 

%Moreover, there could even be local maximas that do not belong to a cluster-core and thus regions around such local maximas which do not belong to any attraction region.
% (see Figure~\ref{fig:localmax_notin}).
%\end{remark}

%\begin{figure}[H]
%\begin{center}
%\includegraphics[width=0.3\textwidth]{}
%\end{center}
%\caption{An example of a density which contains a local maxima which does not belong to a cluster-core.
%The shaded region corresponds to the cluster-core. For such a density, Quickshift++ will provably recover the single cluster-core and avoid falsely estimating the lone local-maxima as another cluster-core (first step). However, since the lone local maxima is not in the cluster-core's attraction region, we do not give guarantees on the clustering of points near the lone local maxima.}\label{fig:localmax_notin}
%\end{figure}

The next regularity assumption we make is that the cluster-cores are on the interior of the attraction region (to avoid situations such as when the cluster-cores intersect with the boundary of the input space).
\begin{assumption}\label{assumption:interior}
There exists $R_0 > 0$ such that $M_i + B(0, R_0) \subseteq \mathcal{A}_i$ for $i = 1,...,C$, where $M + B(0, r)$ denotes $\{ x : \inf_{y \in M} ||x - y|| \le r\}$.
\end{assumption}

\begin{definition} [Level Set]
The $\lambda$ level set of $f$ is defined as $L_f(\lambda) := \{x \in \mathcal{X}: f(x) \ge \lambda \}$.
\end{definition}
%\begin{definition} [Hausdorff Distance]
%$d_H(X, Y) := \inf \{r \ge 0 : X \subseteq Y + B(0,r), Y \subseteq X + B(0, r) \}$
%\end{definition}

The next assumption says that the level sets are continuous w.r.t. the level in the following sense where we denote the $\epsilon$-interior of $A$ as $A^{\ominus \epsilon} := \{x \in A, \inf_{y \in \partial A} ||x- y|| \ge \epsilon \}$ ($\partial A$ is the boundary of $A$):

\begin{assumption} [Uniform Continuity of Level Sets]\label{assumption:levelset_continuity}
For each $\epsilon > 0$, there exists $\delta > 0$ such that for $0 < \lambda \le \lambda' \le ||f||_\infty$ with $|\lambda - \lambda'| < \delta$, then
$L_f(\lambda)^{\ominus \epsilon} \subseteq L_f(\lambda')$.
\end{assumption}
This ensures that there are no approximately flat areas in which the procedure may get stuck at. The assumption is borrowed from \cite{jiang2017on}. Finally, we need the following regularity condition which ensures that level sets away from cluster-cores do not get arbitrarily thin. This is adapted from standard analyses of level-set estimation (e.g. Assumption B of \citet{singh2009adaptive}).

\begin{assumption}\label{assumption:levelset_reg}
Let $\mu$ denote the Lebesgue measure on $\mathbb{R}^d$.
For any $r > 0$, there exists $\sigma > 0$ such that the following holds for any connected component $A$ of any level-set of $f$ which is not contained in $M_i$ for any $i$: $\mu(B(x, r) \cap A) \ge \sigma$ for all $x \in A$.
\end{assumption}

For our consistency results, we prove that Quickshift++ can cluster the sample points in the $(R, \rho)$-interior of an attraction region (defined below) for each cluster-core properly where $R, \rho > 0$ are fixed and can be chosen arbitrarily small.

\begin{definition} [$(R, \rho)$-interior of Attraction Regions]
Define the $(R, \rho)$-interior of $\mathcal{A}_i$, denoted as $\mathcal{A}^{(R, \rho)}_i$, as the set of points $x_0 \in \mathcal{A}_i$ such that
each path $\mathcal{P}$ from $x_0$ to any point $y \in \partial \mathcal{A}_i$ satisfies the following.
\begin{align*}
\sup_{x \in \mathcal{P}} \inf_{x' \in B(x, R)} f(x') \ge \sup_{x' \in B(y, R)} f(x') + \rho.
\end{align*}
\end{definition}
In other words, points in the interior satisfy the property that any path leaving its attraction region must sufficiently decrease in density at some point. This decrease threshold is parameterized by $R$ and $\rho$.
\begin{figure}[h]
\begin{center}
\includegraphics[width=0.3\textwidth]{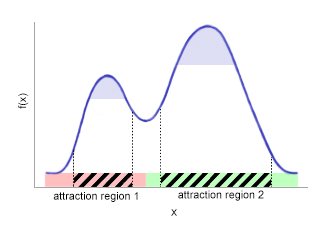}
\end{center}
\vspace{-0.5cm}
\caption{Illustration of interior of attraction region in 1-dimension. The pink and green shaded regions indicated the two attraction regions. The striped parts show the corresponding interiors of the attraction regions.}
\end{figure}

\vspace{-0.3cm}

We first give a guarantee on the first step of MCores recovers, that the cluster-cores are reasonably recovered. The proof follows from the analysis of \citet{jiang2017modal} by replacing modal-sets with cluster-cores, and the results match up to constant factors. The proof is omitted here.

\begin{theorem}\label{theo:mcores}[Adapted from Theorem 3, 4 of \citet{jiang2017modal}]
Suppose that Assumptions~\ref{assumption:diff},~\ref{assumption:levelset_continuity}, and~\ref{assumption:levelset_reg} hold. Let $0 < \beta < 1$, $\epsilon, \delta > 0$ and suppose that $k \equiv k(n)$ is chosen such that $\log^2 n / k \rightarrow 0$ and $n^{4/(4+d)} / k \rightarrow 0$. Let $M_1,...,M_C$ be the cluster-cores of $f$.
Then for $n$ sufficiently large depending on $f$, $\delta$, $\epsilon$, and $\beta$, with probability at least $1 - \delta$, MCores returns $C$ cluster-core estimates $\widehat{M_1},...,\widehat{M_C}$ such that $M_i \cap X_{[n]} \subseteq \widehat{M_i} \subseteq M_i + B(0, \epsilon)$ for $i \in 1,...,C$.
\end{theorem}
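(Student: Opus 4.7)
The plan is to adapt the consistency argument for modal-sets from \citet{jiang2017modal} to the multiplicative-fluctuation cluster-core setting, the only substantive change being that the relevant level offset at density $\lambda$ becomes $\beta\lambda$ rather than a fixed additive constant; this rescaling is benign because $f \ge \lambda_0 > 0$ on $\mathcal{X}$ by Assumption~\ref{assumption:diff}, so $\beta\lambda$ is uniformly bounded away from zero on every level where a cluster-core peak can appear. The proof rests on three pillars: (i) a uniform concentration bound $\sup_{x \in \mathcal{X}}|f_k(x) - f(x)| \le \varepsilon_n$ holding with probability $\ge 1-\delta$ for some $\varepsilon_n \to 0$ under the rate conditions $\log^2 n/k \to 0$ and $n^{4/(4+d)}/k \to 0$; (ii) a graph/level-set correspondence showing that the CCs of the mutual $k$-NN graph $G(\lambda)$ track the CCs of the true level set $L_f(\lambda)$ up to an $O(\varepsilon_n)$ shift in $\lambda$; and (iii) a top-down sweep analysis ensuring each true cluster-core is captured exactly once and no spurious core is produced.

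For (ii), two inclusions are needed. \emph{Connectivity}: if $x, x' \in X_{[n]}$ sit in the same CC of $L_f(\lambda + \varepsilon_n)$, they sit in the same CC of $G(\lambda)$; Assumption~\ref{assumption:levelset_reg} guarantees every level-set CC has positive Lebesgue content in every small ball, so any continuous path inside the level set can be covered by sample points whose mutual $k$-NN radii (scaling like $(k/n f)^{1/d}$) are small enough to yield edges. \emph{Separation}: distinct CCs of $L_f(\lambda - \varepsilon_n)$ remain distinct in $G(\lambda)$, because no edge of length $\min\{r_k(x), r_k(x')\}$ can bridge a gap whose floor density lies strictly below $\lambda - \varepsilon_n$; Assumption~\ref{assumption:levelset_continuity} converts this into a uniform quantitative separation by forbidding near-flat saddles. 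Together (i) and (ii) let us transfer statements about $L_f$ at level $\mu$ to statements about $G(\mu')$ for any $\mu'$ within $O(\varepsilon_n)$ of $\mu$.

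For (iii), process samples in decreasing order of $f_k$. When the first $x_i \in M_j$ is processed, (i) forces $\lambda := f_k(x_i) \in [\lambda_j^\star - \varepsilon_n, \lambda_j^\star + \varepsilon_n]$ where $\lambda_j^\star := \sup_{x \in M_j} f(x)$, so the graph level $(1-\beta)\lambda$ lies within $\beta\varepsilon_n$ of $(1-\beta)\lambda_j^\star$ --- the threshold whose CC in $L_f$ equals $M_j$ by Definition~\ref{def:clustercore}. Applying (ii) then yields that the CC $A$ of $G((1-\beta)\lambda)$ containing $x_i$ satisfies $M_j \cap X_{[n]} \subseteq A \subseteq M_j + B(0, \epsilon)$ for large $n$, and $A$ is disjoint from any previously added core by lifting the non-overlap lemma for true cluster-cores through (ii). Conversely, if $x_i \notin \bigcup_j M_j$, then (using that the CC of $L_f(f(x_i))$ containing $x_i$ must contain at least one local maximum of $f$ by compactness and continuity, and that every local maximum lies in some cluster-core $M_{j'}$ with $\lambda_{j'}^\star > f(x_i)$) the CC $A$ of $G((1-\beta)f_k(x_i))$ containing $x_i$ engulfs the already-processed peak sample of $M_{j'}$; since $A$ sits at a lower graph level than the one at which $\widehat{M_{j'}}$ was recorded, monotonicity of CCs under level decrease gives $\widehat{M_{j'}} \subseteq A$, so the disjointness test fails and no spurious core is added. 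Hence exactly $C$ estimates are produced with the claimed sandwich $M_i \cap X_{[n]} \subseteq \widehat{M_i} \subseteq M_i + B(0, \epsilon)$.

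The main obstacle is the second half of (iii): certifying that no CC born during the sweep corresponds to a noise fluctuation of $f_k$ rather than a true cluster-core. This is controlled by choosing $n$ large enough that $\varepsilon_n$ falls strictly below both (a) the minimum prominence $\beta \min_j \lambda_j^\star$ of any cluster-core, and (b) the minimum excess of each $\lambda_j^\star$ over the worst saddle separating $M_j$ from the other cores. Both thresholds are strictly positive: (a) follows from Assumption~\ref{assumption:diff} and the finiteness of $C$, while (b) follows from compactness of $\mathcal{X}$ combined with Assumption~\ref{assumption:levelset_continuity}. The rate conditions on $k$ guarantee that $\varepsilon_n$ eventually falls below these thresholds, completing the argument.
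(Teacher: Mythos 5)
The paper omits this proof entirely, stating only that it ``follows from the analysis of \citet{jiang2017modal} by replacing modal-sets with cluster-cores''; your proposal is a faithful reconstruction of exactly that argument (uniform $k$-NN density concentration, the graph/level-set correspondence for the mutual $k$-NN graph, and the top-down sweep establishing both capture of every true core and exclusion of spurious ones), with the only genuinely new ingredient --- the multiplicative fluctuation $\beta\lambda$ in place of an additive one --- handled correctly via the uniform lower bound $f \ge \lambda_0 > 0$ from Assumption~\ref{assumption:diff}. This matches the approach the paper points to, so there is nothing to flag.
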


\begin{remark}
The original result from \citet{jiang2017modal} is about $\epsilon$-approximate modal-set which are defined as level-sets whose density has range $\epsilon$. 
%(the difference between the highest and lowest density levels within the level-set).
Our notion of cluster-core is similar, but the range is a $\beta$-proportion of the highest density level within the level-set. Using a proportion is more interpretable and thus more useful, as the scale of the density function is difficult to determine in practice.
\end{remark}

In other words, with high probability, MCores estimates each cluster-core bijectively and that for each cluster-core, MCores' estimate contains all of the sample points and that the estimate does not over-estimate by much.

We now state the main result, which says that as long as the cluster-cores are sufficiently well estimated (up to a certain Hausdorff error) by MCores (via previous theorem), then Quickshift++ will correctly cluster the $(R, \rho)$-interiors of the attraction regions with high probability.

\begin{theorem}\label{theo:main}
Suppose that Assumptions~\ref{assumption:diff},~\ref{assumption:interior},~\ref{assumption:levelset_continuity}, and~\ref{assumption:levelset_reg} hold.
Let $0 < R < R_0$ and $\rho, \delta> 0$. Suppose that $k \equiv k(n)$ is chosen such that $\log^2 n / k \rightarrow 0$ and $n^{4/(4+d)} / k \rightarrow 0$.
Suppose that $\widehat{M}_1,...,\widehat{M}_C$ are the cluster-cores returned by Algorithm~\ref{alg:clustercore} and satisfy 
$M_i \cap X_{[n]} \subseteq \widehat{M}_i \subseteq M_i + B(0,R/4)$ for $i =1,...,C$.
Then for $n$ sufficiently large depending on $f$, $\rho$, $\delta$ and $R$, the following holds with probably at least $1 - 2\delta$ uniformly in $x \in \mathcal{A}_{i}^{(R,\rho)} \cap X_{[n]}$ and $i \in [C]$:
Quickshift++ clusters $x$ to the cluster corresponding to $M_i$.
\end{theorem}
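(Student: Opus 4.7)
The plan is to reduce the claim to the interplay of two ingredients: the Quickshift++ trajectory is strictly monotone in $f_k$, and the $(R,\rho)$-interior property forces any continuous path leaving $\mathcal{A}_i$ to first pass through a point whose density strictly exceeds that of the exit point by at least $\rho$. The first step is to invoke standard uniform-convergence results for the $k$-NN density estimator (e.g.\ \citet{chaudhuri2010rates}): under the stated rates for $k(n)$, there is an event of probability at least $1-\delta$ on which $\sup_{x \in \mathcal{X}} |f_k(x) - f(x)| \le \eta_n$ for some vanishing $\eta_n$; I choose $n$ large enough that $\eta_n < \rho/4$. This is what lets me transfer the density-peak structure from $f$ over to the $f_k$-monotone trajectory.

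The second step is to establish a step-size bound: for $n$ large, every hop $\|x_t - x_{t+1}\|$ in a Quickshift++ trajectory starting from $\mathcal{A}_i^{(R,\rho)} \cap X_{[n]}$ is at most $R/4$. I would prove this by showing that for any such sample $x_t \notin \bigcup_j \widehat{M}_j$, the combination of Assumption~\ref{assumption:levelset_continuity} (yielding a short-range density increase of at least $4\eta_n$ from $x_t$) with a standard sample-coverage event at resolution $o(R)$ produces a sample within $R/4$ of $x_t$ whose $f_k$ value strictly exceeds $f_k(x_t)$. The argument parallels the step-size analysis in the Quick Shift consistency proofs of \citet{jiang2017on}. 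I expect this to be the main technical obstacle, since one must rule out $x_t$ being a spurious local maximum of $f_k$ away from every true cluster-core; the hypotheses $M_i \cap X_{[n]} \subseteq \widehat{M}_i$ and the peak structure in the definition of $\mathcal{A}_i^{(R,\rho)}$ are what make this possible.

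The third step is the main structural argument, by contradiction. Fix $x \in \mathcal{A}_i^{(R,\rho)} \cap X_{[n]}$ and let the trajectory be $x = x_0, x_1, \ldots, x_T$ ending at $x_T \in \widehat{M}_j$, with $f_k(x_0) < f_k(x_1) < \cdots < f_k(x_T)$. Assumption~\ref{assumption:interior} and $R < R_0$ give $\widehat{M}_j \subseteq M_j + B(0, R/4) \subseteq \mathcal{A}_j$; since attraction regions are pairwise disjoint, $j \ne i$ would force the polyline $x_0 x_1 \cdots x_T$ to cross $\partial \mathcal{A}_i$ at a first point $y$ lying on the segment between some $x_t \in \mathcal{A}_i$ and $x_{t+1} \notin \mathcal{A}_i$. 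I apply the $(R,\rho)$-interior property to the truncated polyline $\mathcal{P}$ from $x_0$ to $y$: there is a witness $x^* \in \mathcal{P}$ with $\inf_{x' \in B(x^*, R)} f(x') \ge \sup_{x' \in B(y, R)} f(x') + \rho$. The step-size bound places $x^*$ within $R/4 < R$ of some earlier trajectory vertex $x_s$ (with $s \le t$), and places $x_{t+1}$ within $R/4 < R$ of $y$; hence $f(x_s) \ge \inf_{B(x^*,R)} f$ and $f(x_{t+1}) \le \sup_{B(y,R)} f$, giving $f(x_s) - f(x_{t+1}) \ge \rho$. Uniform convergence then yields $f_k(x_s) - f_k(x_{t+1}) \ge \rho - 2\eta_n > 0$, contradicting the strict monotonicity $f_k(x_s) \le f_k(x_t) < f_k(x_{t+1})$ of the trajectory. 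Hence $j = i$. A union bound over the uniform-convergence event and the step-size event yields the claimed probability $1 - 2\delta$.
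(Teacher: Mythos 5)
Your proposal follows essentially the same route as the paper's proof: a uniform $k$-NN convergence event, a step-size bound obtained from the level-set regularity assumptions plus a sample-coverage (VC/concentration) event guaranteeing a nearby sample of strictly larger $f_k$, and then a contradiction between the $f$-density drop forced by the $(R,\rho)$-interior property along the piecewise-linear trajectory and the strict $f_k$-monotonicity of Quickshift++ hops. The only differences are cosmetic (the paper uses a step bound of $R/2$ and tolerance $\min\{\tau,\rho\}/3$ rather than $R/4$ and $\rho/4$, and your second step is sketched at about the same level of detail as the paper's, which invokes Assumptions~\ref{assumption:levelset_continuity} and~\ref{assumption:levelset_reg} together with its Lemma~\ref{lemma:concentration}).
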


\subsection{Proof of Theorem~\ref{theo:main}}
We require the following uniform bound on $k$-NN density estimator, which follows from \citet{dasgupta2014optimal}.

\begin{lemma}\label{lemma:knn}
Let $\delta > 0$.
Suppose that $f$ is Lipschitz continuous with compact support $\mathcal{X}$ (e.g. there exists $L$ such that $|f(x) - f(x')| \le L|x-x'|$ for all $x,x' \in \mathcal{X}$)
 and $f$ satsifies Assumption~\ref{assumption:diff}.
Then exists constant $C$ depending on $f$ such that the following holds if $n \ge C_{\delta, n}^2$ with probability at least $1 - \delta$.
\begin{align*}
\sup_{x \in \mathcal{X}} |f_k(x) - f(x)| \le C\left( \frac{C_{\delta, n}}{\sqrt{k}} + \left( \frac{k}{n}\right)^{1/d} \right).
\end{align*}
where $C_{\delta, n} := 16 \log(2/\delta) \sqrt{d \log n}$.
\end{lemma}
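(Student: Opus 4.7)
The plan is to follow the standard empirical-process analysis for $k$-NN density estimators, decomposing the pointwise error into a statistical term arising from fluctuations of the empirical $k$-th nearest-neighbor radius $r_k(x)$ about its population counterpart, and a bias term arising from Lipschitz approximation of $f$ on small balls. These two contributions respectively yield the $C_{\delta,n}/\sqrt{k}$ and $(k/n)^{1/d}$ pieces in the stated bound.

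First, I would apply a uniform relative concentration inequality over the class of Euclidean balls in $\real^d$, which has VC-dimension $d+1$. This produces, with probability at least $1-\delta$, the event that for every $x \in \mathcal{X}$ and every $r \ge 0$,
\[
\bigl|P_n(B(x,r)) - \P(B(x,r))\bigr| \;\lesssim\; \sqrt{\P(B(x,r)) \log(n)/n} \;+\; \log(n)/n,
\]
where $P_n$ is the empirical measure. Since by definition $P_n(B(x, r_k(x))) = k/n$, this yields $|\P(B(x, r_k(x))) - k/n| \lesssim C_{\delta,n}\sqrt{k}/n$. Let $r^*_k(x)$ denote the deterministic radius at which $\P(B(x, r^*_k(x))) = k/n$. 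Since $f \ge \lambda_0 > 0$ on $\mathcal{X}$, the map $r \mapsto \P(B(x,r))$ has derivative at least $\lambda_0 v_d d\, r^{d-1}$ in a neighborhood of $r^*_k(x)$, so the concentration above translates into $\bigl|r_k(x)^d - r^*_k(x)^d\bigr| \lesssim r^*_k(x)^d \cdot C_{\delta,n}/\sqrt{k}$.

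Second, I would invoke the Lipschitz continuity of $f$ to expand $\P(B(x,r)) = f(x)\, v_d\, r^d + O(L v_d r^{d+1})$ for $r$ small. Combined with $\P(B(x, r^*_k(x))) = k/n$ this gives $r^*_k(x)^d = (k/n)/(v_d f(x)) \cdot (1 + O((k/n)^{1/d}))$, and in particular $r^*_k(x) \lesssim (k/n)^{1/d}$. Substituting back into $f_k(x) = (k/n)/(v_d r_k(x)^d)$ and performing the algebra gives
\[
f_k(x) - f(x) \;=\; f(x)\cdot O\!\bigl(C_{\delta,n}/\sqrt{k}\bigr) \;+\; O\!\bigl(L\,(k/n)^{1/d}\bigr),
\]
uniformly in $x \in \mathcal{X}$. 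Since $f$ is bounded on its compact support, taking the supremum yields the claimed inequality with $C$ depending on $\|f\|_\infty$, $\lambda_0$, $L$, $d$, and $v_d$. The hypothesis $n \ge C_{\delta,n}^2$ is needed precisely to ensure the linearizations in the steps above are in the valid small-deviation regime, i.e. $C_{\delta,n}/\sqrt{k} = O(1)$ once $k$ is chosen large enough.

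The main obstacle is the first step: an additive VC bound produces an error of order $\sqrt{\log(n)/n}$, which is too coarse because once one observes that $\P(B(x, r_k^*(x))) = k/n$ is itself small, extracting a factor of $1/\sqrt{k}$ rather than $1/\sqrt{n}$ requires a relative (Bernstein-type) inequality. This is exactly what is furnished by the uniform concentration results of \citet{dasgupta2014optimal}, from which the stated bound follows after standard translation to our parameterization and combination with the Lipschitz bias step.
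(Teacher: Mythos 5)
Your proposal is correct and takes essentially the same route as the paper, which offers no proof of its own and simply defers to \citet{dasgupta2014optimal}; your variance-plus-bias decomposition (relative VC/Bernstein concentration over balls for the $C_{\delta,n}/\sqrt{k}$ term, Lipschitz smoothing bias for the $(k/n)^{1/d}$ term) is exactly the argument underlying that reference. Nothing further is needed.
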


We next need the following uniform concentration bound on balls intersected with level-sets, which says that if such a set has large enough probability mass, then it will contain a sample point with high probability.
\begin{lemma}\label{lemma:concentration}
Let $\mathcal{E} := \{B(x, r) \cap L_f(\lambda) : x \in \mathbb{R}^d, r > 0, \lambda > 0\}$. Then the following holds with probability at least $1 - \delta$ uniformly for all $E \in \mathcal{E}$ 
\begin{align*}
    \mathcal{F}(E) \ge C_{\delta, n}\frac{\sqrt{d\log n}}{n} \Rightarrow E \cap X_{n} \neq \emptyset.
\end{align*}
\end{lemma}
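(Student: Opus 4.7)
The plan is to prove Lemma~\ref{lemma:concentration} as a standard uniform concentration bound on the class $\mathcal{E}$ after observing that it has bounded VC dimension. The inequality we want is essentially a one-sided (relative) Vapnik-Chervonenkis-type bound: if the true mass of $E$ is large enough compared to $d\log n/n$ and $\log(1/\delta)/n$, then the empirical mass is strictly positive uniformly over the class. Since $n \mathcal{F}_n(E) \ge 1$ is exactly the statement that $E$ contains a sample, this yields the lemma.

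The first step is to bound the VC dimension of $\mathcal{E}$. The family of Euclidean balls $\{B(x,r) : x \in \mathbb{R}^d, r>0\}$ has VC dimension $d+1$. The family of upper level sets $\mathcal{L} := \{L_f(\lambda) : \lambda > 0\}$ is totally ordered under inclusion (since $\lambda \le \lambda'$ implies $L_f(\lambda') \subseteq L_f(\lambda)$), and any totally ordered family of sets has VC dimension at most $1$: no two-point set can be shattered, because if $L_f(\lambda)$ contains one of two points and $L_f(\lambda')$ contains the other, nesting forces one of them to contain both. Intersections of two classes of VC dimensions $\nu_1$ and $\nu_2$ form a class of VC dimension $O(\nu_1 + \nu_2)$, so $\mathcal{E}$ has VC dimension $O(d)$.

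The second step is to invoke a one-sided uniform concentration inequality. Using the classical relative VC bound (or equivalently the Bernstein-style empirical process inequality of Giné-Guillou or Talagrand), with probability at least $1-\delta$ we have uniformly for all $E \in \mathcal{E}$
\begin{align*}
\mathcal{F}(E) \;\le\; 2\mathcal{F}_n(E) + c\,\frac{d\log n + \log(1/\delta)}{n},
\end{align*}
for an absolute constant $c$. Now suppose $\mathcal{F}(E) \ge C_{\delta,n}\sqrt{d\log n}/n$. Since $C_{\delta,n} = 16\log(2/\delta)\sqrt{d\log n}$, the right-hand side of the hypothesis is at least a constant multiple of $(d\log n + \log(1/\delta))/n$ for $n$ sufficiently large, so the additive term on the right is absorbed and we conclude $\mathcal{F}_n(E) \ge \mathcal{F}(E)/4 > 0$. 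Since $\mathcal{F}_n(E)$ is a multiple of $1/n$, this forces $\mathcal{F}_n(E) \ge 1/n$, i.e.\ $E \cap X_{[n]} \ne \emptyset$.

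The only real subtlety is the VC-dimension argument for the level sets: one must be careful that $\mathcal{L}$ consists of specific level sets of the unknown function $f$, not arbitrary measurable sets, and that the nesting property alone is what bounds the shattering number. The matching of the explicit constant $C_{\delta,n}$ with the form of the concentration bound is routine bookkeeping once one has the VC bound; taking $n$ large enough depending on $f$ (to absorb the subdominant additive term) causes no loss because the statement is already asymptotic.
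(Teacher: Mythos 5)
Your proposal is correct and follows essentially the same route as the paper: bound the VC dimension of $\mathcal{E}$ by combining the VC dimension $d+1$ of balls with the VC dimension $1$ of the nested family of level sets, then apply a uniform (relative VC-type) concentration bound to conclude that any set of mass at least $C_{\delta,n}\sqrt{d\log n}/n$ must contain a sample point. The only difference is that the paper cites Theorem 15 of Chaudhuri and Dasgupta (2010) to package the concentration step, whereas you unpack it via the classical relative VC inequality; the content is the same.
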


\begin{proof}
The indicator functions $1[B(x, f) \cap L_f(\lambda)]$ for $x \in \mathbb{R}^d$, $\lambda > 0$ have VC-dimension $d+1$. This is because the balls over $\mathbb{R}^d$ have VC-dimension $d+1$ and the level-sets $L_f(\lambda)$ has VC-dimension $1$ and thus their intersection has VC-dimension $d+1$ \cite{van2009note}. The result follows by applying Theorem 15 of \citet{chaudhuri2010rates}.
\end{proof}

\begin{proof}[Proof of Theorem~\ref{theo:main}]
Suppose that $x_0 \in \mathcal{A}_i^{(R,\rho)} \cap X_{[n]}$ and Quickshift++ gives directed path $x_0 \rightarrow x_1 \rightarrow \cdots \rightarrow x_L$ where $x_1,...,x_{L-1}$ are outside of cluster-cores and $x_L$ is in a cluster-core but $x_L \not\in \mathcal{A}_{i}$.

We first show that $||x_i - x_{i+1}|| \le R/2$ for $i=0,...,L-1$. By Assumption~\ref{assumption:levelset_continuity} and~\ref{assumption:levelset_reg}, we have that there exists $\tau > 0$ and $\sigma > 0$ such that the following holds uniformly for $i=0,...,L-1$:
{\small
\begin{align*}
    \mu\bigg(B(x_i, R/2) \cap L_f(f(x_i) + \tau)\bigg) \ge \sigma.
\end{align*}
}%
Hence, since the density is uniformly lower bounded by $\lambda_0$, we have
{\small
\begin{align*}
    \mathcal{F}\bigg(B(x_i, R/2) \cap L_f(f(x_i) + \tau)\bigg) \ge \sigma \lambda_0 .
\end{align*}
}%
Then by Lemma~\ref{lemma:concentration}, for $n$ suffiicently large such that $\sigma \lambda_0 > C_{\delta, n}\frac{\sqrt{d\log n}}{n}$, then with probability at least $1 - \delta$ there exists sample point $x_i'$ in $B(x_i, R/2) \cap L_f(f(x_i) + \tau)$ for $i = 0,...,L-1$.

Next, choose $n$ sufficiently large such that by Lemma~\ref{lemma:knn}, we have with probability at least $1 - \delta$ that
\begin{align*}
    \sup_{x \in \mathcal{X}} |f_k(x) - f(x)| \le \min\{\tau, \rho\} / 3.
\end{align*}
Thus, we have 
\begin{align*}
    f_k(x_i') &\ge f(x_i') - \tau/3 \ge f(x_i) + 2\tau/3 \\
    &\ge f_k(x_i) + \tau/3 > f_k(x_i).
\end{align*}
Moreover $||x_i - x_i'|| \le R/2$ and $x_i' \in X_{[n]}$, it follows that 
$||x_i - x_{i+1}||\le R/2$ for $i =0,...,L-1$.

Let $\pi : [0,1] \rightarrow \mathbb{R}^d$ be the piecewise linear path defined by $\pi(j/L) = x_j$ for $j = 0,...,L$. Let $t_2 = \min \{t \in [0,1] : \pi(t) \in \partial \mathcal{A}_i \}$. Then, by definition of $\mathcal{A}_{i}^{(R,\rho)}$, there exists $0 \le t_1 < t_2$ such that $x := \pi(t_1)$ and $y := \pi(t_2)$ satisfies $y \in \partial \mathcal{A}_i$ and
\begin{align*}
\inf_{x' \in B(x, R)} f(x') \ge \sup_{x' \in B(y, R)} f(x') + \rho.
\end{align*}

Thus, there exists indices $p, q \in \{0,...,L-1\}$ such that $p \le q$,
$|x_p - x| \le R$, and $|x_q - y| \le R$. Thus, we have
$f(x_p) \ge f(x_q) + \rho$, but $f_k(x_p) \le f_k(x_q)$. However, we have
\begin{align*}
    f_k(x_p) &\ge f(x_p) - \rho/3 \ge f(x_q) + 2\rho/3\\
    &\ge f_k(x_q) + \rho/3 > f_k(x_q),
\end{align*}
a contradiction, as desired.
\end{proof}

\section{Simulations}

\begin{figure}[!htb]
\begin{center}
\includegraphics[width=0.4\textwidth]{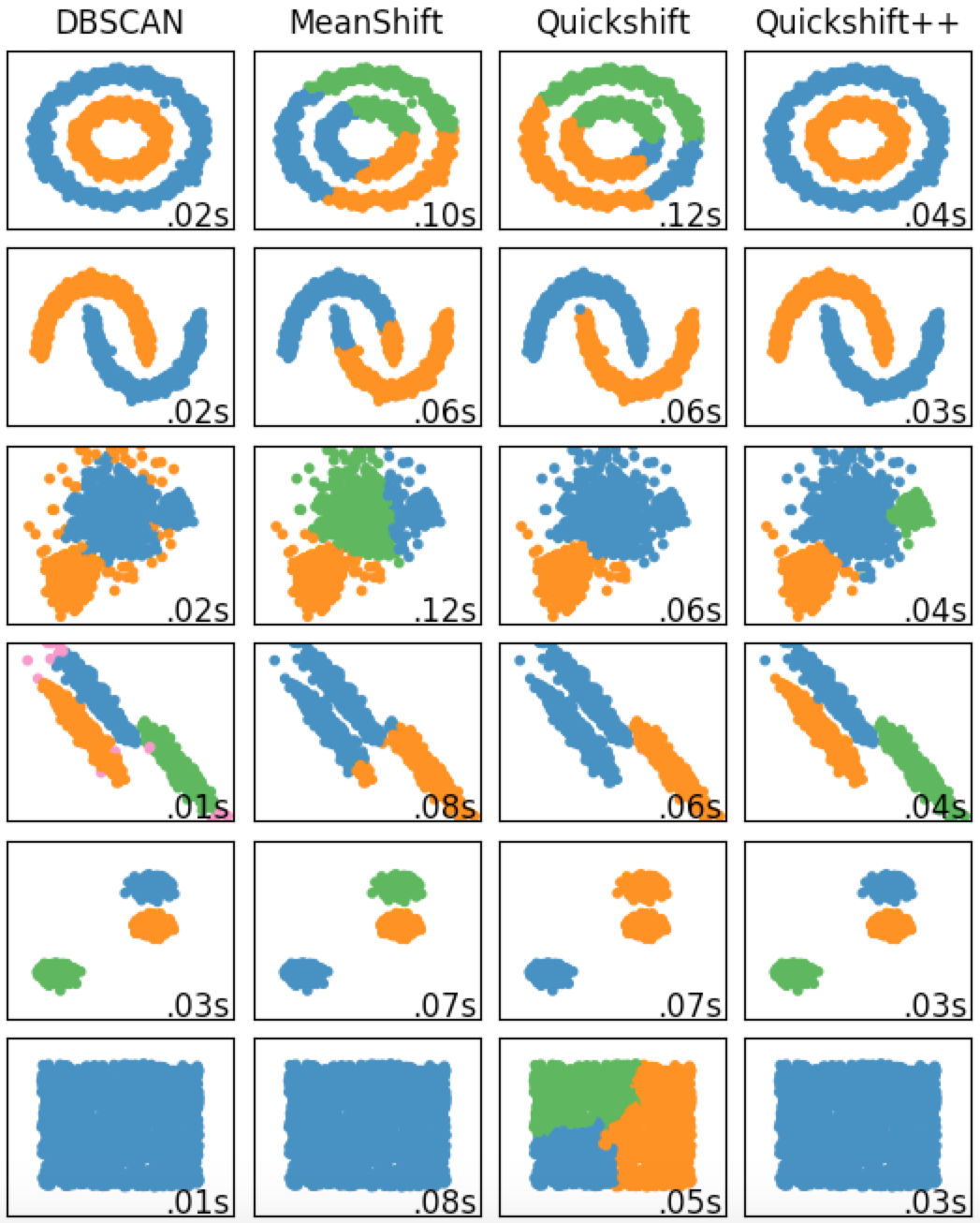}
\end{center}
\caption{\label{fig:simulation} Comparison against other clustering algorithms on toy datasets, adapted from scikit-learn cluster demo. Quickshift++ settings were fixed at $k = 20, \beta = 0.7$ for all the datasets, while the other algorithms were tuned to obtain a reasonable number of clusters.}
\end{figure}

Figure~\ref{fig:simulation} provides simple verification that Quickshift++ provides reasonable clusterings in a wide variety of situations where other density-based procedures are known to fail. For instance, in the two rings dataset (first row), we see that Mean Shift and Quick Shift suffer from the over-segmentation issue coupled with the oversized bandwidth which causes them to recover clusters that have points from both the rings even though the rings are separated. In the three Gaussians dataset (third row), we see that DBSCAN fails because the three clusters are of different density levels and thus no matter which density-level we set, DBSCAN will not be able to recover the three clusters.

\section{Image Segmentation}

In order to apply clustering to image segmentation, we use the following standard approach (see e.g. \citet{felzenszwalb2004efficient}):
 we transform each pixel into a $5$-dimensional vector where two coordinates correspond to the location of the pixel and three correspond to each of the RGB color channels. Then segmentation is done by clustering this $5$-dimensional dataset. 
 
We observed that for Quickshift++, setting $\beta = 0.9$ is reasonable across a wide range of images, $\beta$ was fixed to this value for segmentation here. We compare Quickshift++ to Quick Shift, as the latter is often used for segmentation. Quick Shift often over-segments in some areas and under-segments in other areas under any hyperparameter setting and we showed the settings which provided a reasonable trade-off.  On the other hand Quickshift++ gives us reasonable segmentations in many cases and can capture segments that may be problematic for other procedures.  

As shown in the figures, it moreover has the interesting property of being able to recover segments of widely varying shapes and sizes in the same image, which suggests that modelling the dense regions of the segments as cluster-cores instead of point-modes may be useful as we compare to Quick Shift. Although this is only qualitative, it further suggests that Quickshift++ is a versatile algorithm and begins to show its potential application in many more areas.

\begin{figure}[!htb]
\begin{center}
\includegraphics[width=0.22\textwidth]{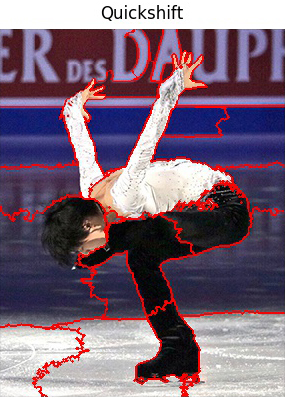}
\includegraphics[width=0.22\textwidth]{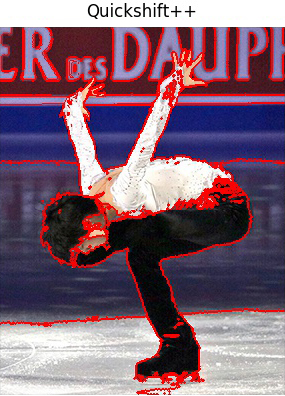}
\end{center}
\caption{Figure skater Yuzuru Hanyu performs at the 2018 Winter Olympics. Quick Shift was set with bandwidth $10$ and Quickshift++ was set with $k = 300$ and $\beta = 0.9$. We see that when compared to Quick Shift, Quickshift++ is able to recover the variations in the background more accurately, including correctly segmenting most of the letters on the wall, while still recovering the structure of Hanyu's costume accurately.}
\end{figure}

\begin{figure}[!htb]
\begin{center}
\includegraphics[width=0.22\textwidth]{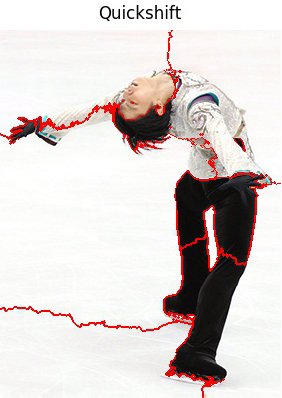}
\includegraphics[width=0.22\textwidth]{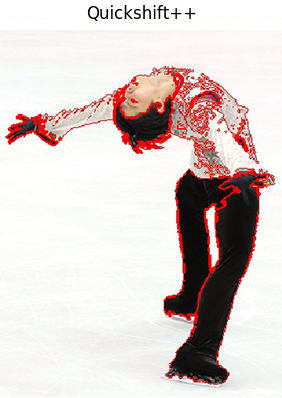}
\end{center}
\caption{Yuzuru Hanyu at the 2017 Rostelecom Cup. Quick Shift was set with bandwidth $15$ and Quickshift++ was set with $k = 50$ and $\beta = 0.9$. Quickshift++ can recover the homogeneous background as a whole, and reasonably separates Hanyu's light-colored costume from the background.}
\end{figure}

\begin{figure}[!htb]
\begin{center}
\includegraphics[width=0.23\textwidth]{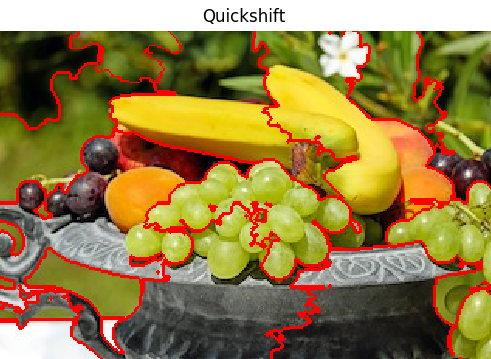}
\includegraphics[width=0.23\textwidth]{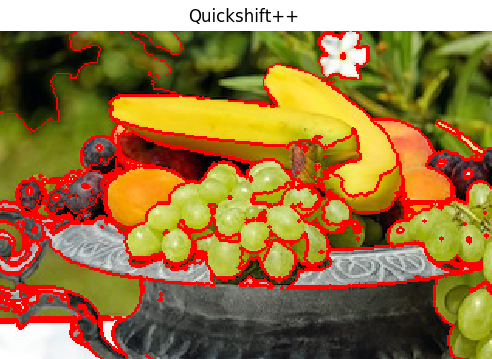}
\end{center}
\caption{Assorted fruit in a metal bowl. For Quick Shift, bandwidth was set to $8$ and for Quickshift++, $k = 100$ and $\beta = 0.9$. Quickshift++ is able to segment most of the fruits in the bowl, while recovering the details of the bowl as well as the structures in the background.}
\end{figure}

\section{Clustering Experiments}
\begin{figure*}[!htb]

\begin{center}
\includegraphics[width=0.75\textwidth]{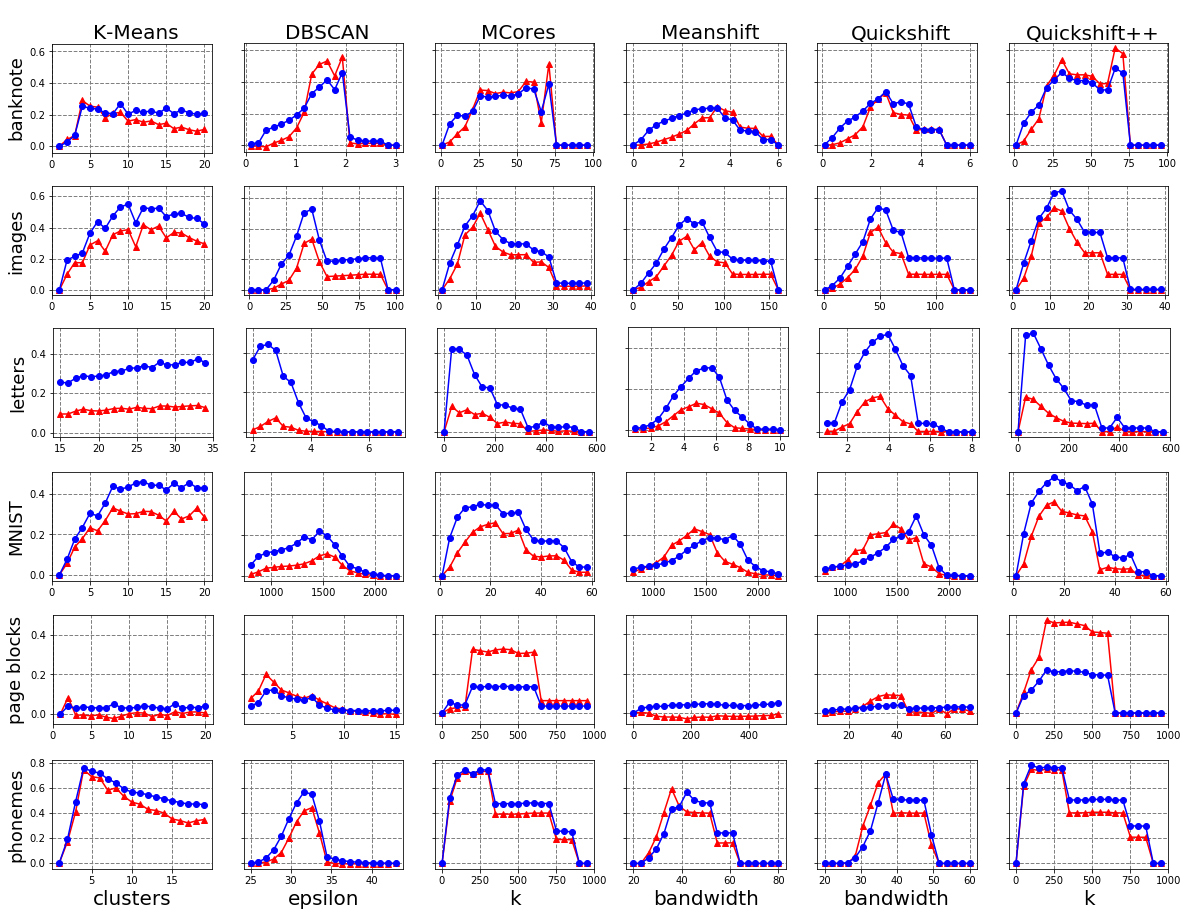}
\end{center}
\caption{\label{figure:stability} For each algorithm, we show clustering performance as a function of its respective hyperparameter setting. The blue line is adj. mutual information and the red line is adj. rand index. Notice that for Quickshift++, we show a wide range of $k$ (relative to $n$), while for the popular procedures, their respective parameters had to be carefully tuned to find the region where the scores are non-trivial.}
\end{figure*}

We ran Quickshift++ against other clustering algorithms on the various real datasets and scored against the ground-truth using the adjusted rand index and the adjusted mutual information scores.

\begin{figure}[H]
\begin{center}
\begin{tabular}{ |p{2.2cm}||p{1cm}|p{0.8cm}|p{1.2cm}| }
 \hline
 Dataset & n & d & Clusters\\
 \hline\hline
 (A) seeds & 210 & 7 & 4 \\
 \hline
 (B) phonemes& 4509 & 258 & 5\\
 \hline
 (C) iris  & 150 & 4 & 3\\
 \hline
  (D) banknote & 1372 & 4 & 2 \\
 \hline 
  (E) images & 210 & 19 & 7\\
 \hline
 (F) letters & 20000 & 16 & 26\\
 \hline
 (G) MNIST  & 1000 & 784 & 10 \\
 \hline
 (H) page blocks & 5473 & 10 & 5\\
 \hline
 (I) glass & 214 & 19 & 7\\
 \hline
\end{tabular}
\caption{\label{fig:datasetsummary}Summary of datasets used, including dataset size ($n$), number of features ($d$) and number of clusters.}
\end{center}
\end{figure}

{\bf Datasets Used}: Summary of the datasets can be found in Figure~\ref{fig:datasetsummary}. Seeds, glass, and iris are standard UCI datasets \cite{Lichman:2013} used for clustering. Banknote is another UCI dataset which involves identifying whether a banknote is forged or not, based on various statistics of an image of the banknote. Page Blocks is a UCI dataset which involves determining the type of a portion of a page (e.g. text, image, etc) based on various statistics of an image of the portion.
Phonemes \cite{friedman2001elements} is a dataset which involves the log periodograms of spoken phonemes. Images is a UCI dataset called Statlog, based on features extracted from various images, and letters is the UCI letter recognition dataset. We also used a small subset of MNIST \cite{lecun2010mnist} for our experiments.

\begin{figure}[H]
\begin{center}
\begin{tabular}{ |p{0.2cm}||p{0.8cm}|p{0.8cm}|p{0.8cm}|p{0.8cm}|p{0.8cm}|p{0.8cm}| }
 \hline
  & KMns & DScn & MCrs & MSft & QSft & QS++\\
 \hline\hline
 A & .7092 &.4473 & .{\bf \color{OliveGreen}7327}  & .{\bf \color{orange}7319} & .6715 & .7261 \\
       & .6738 & .4429 &  .{\bf \color{orange}6872} & .6769 & .6360 & .{\bf \color{OliveGreen}7085} \\
 \hline
  B & .{\bf \color{orange}7432}  & .4458 & .7361 & .5974 & .7165 & .{\bf  \color{OliveGreen}7530} \\
       & .{\bf \color{orange}7574} & .5731 & .7479 & .5700 & .7149 & .{\bf  \color{OliveGreen}7870} \\
 \hline
 C     & .7294 & .5898 & .{\bf \color{orange}7261} & .7028 & .6203 & .{\bf  \color{OliveGreen}7399} \\
          & .{\bf \color{orange}7418} & .5865 & .7265 & .6106 & .5836 & .{\bf \color{OliveGreen}7424} \\
 \hline
 D & .2893 & .{\bf \color{orange}5584} & .5145 & .2434 & .3318 & .{\bf \color{OliveGreen}6152} \\
          & .2690 & .{\bf \color{orange}4594} & .3857 & .2351 & .3397 & .{\bf \color{OliveGreen}4866} \\
 \hline 
 E   & .4177 & .3313 & .{\bf \color{orange}5008} & .3497 & .4077 & .{\bf \color{OliveGreen}5359} \\
          & .5497 & .5264 & .{\bf \color{orange}5814} & .4656 & .5364 & .{\bf \color{OliveGreen}6456} \\
 \hline
 F & .1384 & .0705 & .1284 & .1287 & .{\bf \color{OliveGreen}1793} & .{\bf \color{orange}1766} \\
         & .3741 & .4422 & .4217 & .3027 & .{\bf \color{orange}4940} & .{\bf \color{OliveGreen}5001} \\
 \hline
 G   & .{\bf \color{orange}3320} & .1070 & .2584 & .2281 & .2503 & .{\bf \color{OliveGreen}3606} \\
         & .{\bf \color{orange}4629} & .2164 & .3483 & .1958 & .2911 & .{\bf \color{OliveGreen}4806} \\
 \hline
 H & .0830 & .1962 & .{\bf \color{orange}3251} & .0028 & .0925 & .{\bf \color{OliveGreen}4727} \\
             & .0524 & .1179 & .{\bf \color{orange}1363} & .0526 & .0397 & .{\bf \color{OliveGreen}2192} \\
 \hline
 I & .2770 & .2844 & .2647 & .2790 & .{\bf \color{OliveGreen}2929} & .{\bf \color{orange}2849} \\
             & .3865 & .3542 & .3523 & .3858 & .{\bf \color{orange}4195} & .{\bf \color{OliveGreen}4250} \\
 \hline
\end{tabular}
\caption{\label{figure:results}For each dataset, the first row is the adjusted rand index scores and the second row is the adjusted mutual information scores. Bolded are {\color{OliveGreen} \bf highest} and {\color{orange} \bf second highest} scores.
For MCores and Quickshift++, we used a single $\beta = 0.3$ for each dataset with the exception of for banknote where $\beta = 0.7$. Then the procedures were tuned in their respective essential hyperparameter: $k$-means (KMns) number of clusters, DBSCAN (DScn) epsilon, MCores (MCrs) $k$ from $k$-NN, mean shift (MSft) bandwidth, quick shift (QSft)  bandwidth, Quickshift++ (QS++) $k$.}
\end{center}
\end{figure}

We evaluate performance under the Adjusted Mutual Information and Rand Index scores \citep{vinh2010information} which are metrics to compare clusterings. Not only do we show that Quickshift++ considerably outperforms the popular density-based clustering procedures under optimal tuning (Figure~\ref{figure:results}), but that it is also robust in its hyperparameter $k$ (Figure~\ref{figure:stability}), all while fixing $\beta = 0.3$ for all but one of the datasets. Such robustness to its tuning parameters is highly desirable since optimal tuning is usually not available in practice.

\section{Conclusion}

We presented Quickshift++, a new density-based clustering procedure that first estimates the cluster-cores of the density, which are locally high-density regions. Then remaining points are assigned to its appropriate cluster-core using a hill-climbing procedure based on Quick Shift. 
Such cluster-cores turn out to be more stable and expressive representations of the possibly complex clusters than point-modes. As a result, Quickshift++ enjoys the advantages of the popular density-based clustering algorithms while avoiding many of their respective weaknesses. We then gave guarantees for cluster recovery. Finally, we showed that the algorithm has {\it strong and robust} performance on real datasets and has promising applications to image segmentation.

\newpage

\section*{Acknowledgements}
We thank the anonymous reviewers for their helpful feedback.

\bibliography{paper}

\begin{thebibliography}{36}
\providecommand{\natexlab}[1]{#1}
\providecommand{\url}[1]{\texttt{#1}}
\expandafter\ifx\csname urlstyle\endcsname\relax
  \providecommand{\doi}[1]{doi: #1}\else
  \providecommand{\doi}{doi: \begingroup \urlstyle{rm}\Url}\fi

\bibitem[Arias-Castro et~al.(2016)Arias-Castro, Mason, and
  Pelletier]{arias2016estimation}
Arias-Castro, Ery, Mason, David, and Pelletier, Bruno.
\newblock On the estimation of the gradient lines of a density and the
  consistency of the mean-shift algorithm.
\newblock \emph{Journal of Machine Learning Research}, 17\penalty0
  (43):\penalty0 1--28, 2016.

\bibitem[Balakrishnan et~al.(2013)Balakrishnan, Narayanan, Rinaldo, Singh, and
  Wasserman]{balakrishnan2013cluster}
Balakrishnan, Sivaraman, Narayanan, Srivatsan, Rinaldo, Alessandro, Singh,
  Aarti, and Wasserman, Larry.
\newblock Cluster trees on manifolds.
\newblock In \emph{Advances in Neural Information Processing Systems}, pp.\
  2679--2687, 2013.

\bibitem[Cadre(2006)]{cadre2006kernel}
Cadre, Beno{\i}ˆt.
\newblock Kernel estimation of density level sets.
\newblock \emph{Journal of multivariate analysis}, 97\penalty0 (4):\penalty0
  999--1023, 2006.

\bibitem[Chac{\'o}n(2012)]{chacon2012clusters}
Chac{\'o}n, Jos{\'e}~E.
\newblock Clusters and water flows: a novel approach to modal clustering
  through morse theory.
\newblock \emph{arXiv preprint arXiv:1212.1384}, 2012.

\bibitem[Chaudhuri \& Dasgupta(2010)Chaudhuri and Dasgupta]{chaudhuri2010rates}
Chaudhuri, Kamalika and Dasgupta, Sanjoy.
\newblock Rates of convergence for the cluster tree.
\newblock In \emph{Advances in Neural Information Processing Systems}, pp.\
  343--351, 2010.

\bibitem[Chaudhuri et~al.(2014)Chaudhuri, Dasgupta, Kpotufe, and von
  Luxburg]{chaudhuri2014consistent}
Chaudhuri, Kamalika, Dasgupta, Sanjoy, Kpotufe, Samory, and von Luxburg,
  Ulrike.
\newblock Consistent procedures for cluster tree estimation and pruning.
\newblock \emph{IEEE Transactions on Information Theory}, 60\penalty0
  (12):\penalty0 7900--7912, 2014.

\bibitem[Chen et~al.(2016)Chen, Genovese, Wasserman,
  et~al.]{chen2016comprehensive}
Chen, Yen-Chi, Genovese, Christopher~R, Wasserman, Larry, et~al.
\newblock A comprehensive approach to mode clustering.
\newblock \emph{Electronic Journal of Statistics}, 10\penalty0 (1):\penalty0
  210--241, 2016.

\bibitem[Chen et~al.(2017)Chen, Genovese, and Wasserman]{chen2017density}
Chen, Yen-Chi, Genovese, Christopher~R, and Wasserman, Larry.
\newblock Density level sets: Asymptotics, inference, and visualization.
\newblock \emph{Journal of the American Statistical Association}, pp.\  1--13,
  2017.

\bibitem[Cheng(1995)]{cheng1995mean}
Cheng, Yizong.
\newblock Mean shift, mode seeking, and clustering.
\newblock \emph{IEEE transactions on pattern analysis and machine
  intelligence}, 17\penalty0 (8):\penalty0 790--799, 1995.

\bibitem[Comaniciu \& Meer(2002)Comaniciu and Meer]{comaniciu2002mean}
Comaniciu, Dorin and Meer, Peter.
\newblock Mean shift: A robust approach toward feature space analysis.
\newblock \emph{IEEE Transactions on pattern analysis and machine
  intelligence}, 24\penalty0 (5):\penalty0 603--619, 2002.

\bibitem[Cormen(2009)]{cormen2009introduction}
Cormen, Thomas~H.
\newblock \emph{Introduction to algorithms}.
\newblock MIT press, 2009.

\bibitem[Dasgupta \& Kpotufe(2014)Dasgupta and Kpotufe]{dasgupta2014optimal}
Dasgupta, Sanjoy and Kpotufe, Samory.
\newblock Optimal rates for k-nn density and mode estimation.
\newblock In \emph{Advances in Neural Information Processing Systems}, pp.\
  2555--2563, 2014.

\bibitem[Ester et~al.(1996)Ester, Kriegel, Sander, Xu,
  et~al.]{ester1996density}
Ester, Martin, Kriegel, Hans-Peter, Sander, J{\"o}rg, Xu, Xiaowei, et~al.
\newblock A density-based algorithm for discovering clusters in large spatial
  databases with noise.
\newblock In \emph{Kdd}, volume~96, pp.\  226--231, 1996.

\bibitem[Felzenszwalb \& Huttenlocher(2004)Felzenszwalb and
  Huttenlocher]{felzenszwalb2004efficient}
Felzenszwalb, Pedro~F and Huttenlocher, Daniel~P.
\newblock Efficient graph-based image segmentation.
\newblock \emph{International journal of computer vision}, 59\penalty0
  (2):\penalty0 167--181, 2004.

\bibitem[Friedman et~al.(2001)Friedman, Hastie, and
  Tibshirani]{friedman2001elements}
Friedman, Jerome, Hastie, Trevor, and Tibshirani, Robert.
\newblock \emph{The elements of statistical learning}, volume~1.
\newblock Springer series in statistics New York, 2001.

\bibitem[Genovese et~al.(2016)Genovese, Perone-Pacifico, Verdinelli, and
  Wasserman]{genovese2016non}
Genovese, Christopher~R, Perone-Pacifico, Marco, Verdinelli, Isabella, and
  Wasserman, Larry.
\newblock Non-parametric inference for density modes.
\newblock \emph{Journal of the Royal Statistical Society: Series B (Statistical
  Methodology)}, 78\penalty0 (1):\penalty0 99--126, 2016.

\bibitem[Hartigan(1975)]{hartigan1975clustering}
Hartigan, John~A.
\newblock \emph{Clustering algorithms}, volume 209.
\newblock Wiley New York, 1975.

\bibitem[Jiang(2017{\natexlab{a}})]{jiang2017density}
Jiang, Heinrich.
\newblock Density level set estimation on manifolds with dbscan.
\newblock \emph{arXiv preprint arXiv:1703.03503}, 2017{\natexlab{a}}.

\bibitem[Jiang(2017{\natexlab{b}})]{jiang2017on}
Jiang, Heinrich.
\newblock On the consistency of quick shift.
\newblock In \emph{Neural Information Processing Systems (NIPS)},
  2017{\natexlab{b}}.

\bibitem[Jiang \& Kpotufe(2017)Jiang and Kpotufe]{jiang2017modal}
Jiang, Heinrich and Kpotufe, Samory.
\newblock Modal-set estimation with an application to clustering.
\newblock In \emph{Artificial Intelligence and Statistics}, pp.\  1197--1206,
  2017.

\bibitem[Kpotufe \& von Luxburg(2011)Kpotufe and von
  Luxburg]{kpotufe2011pruning}
Kpotufe, Samory and von Luxburg, Ulrike.
\newblock Pruning nearest neighbor cluster trees.
\newblock \emph{arXiv preprint arXiv:1105.0540}, 2011.

\bibitem[LeCun et~al.(2010)LeCun, Cortes, and Burges]{lecun2010mnist}
LeCun, Yann, Cortes, Corinna, and Burges, CJ.
\newblock Mnist handwritten digit database.
\newblock \emph{AT\&T Labs [Online]. Available: http://yann. lecun.
  com/exdb/mnist}, 2, 2010.

\bibitem[Li et~al.(2007)Li, Ray, and Lindsay]{li2007nonparametric}
Li, Jia, Ray, Surajit, and Lindsay, Bruce~G.
\newblock A nonparametric statistical approach to clustering via mode
  identification.
\newblock \emph{Journal of Machine Learning Research}, 8\penalty0
  (Aug):\penalty0 1687--1723, 2007.

\bibitem[Lichman(2013)]{Lichman:2013}
Lichman, M.
\newblock {UCI} machine learning repository, 2013.
\newblock URL \url{http://archive.ics.uci.edu/ml}.

\bibitem[Rigollet et~al.(2009)Rigollet, Vert, et~al.]{rigollet2009optimal}
Rigollet, Philippe, Vert, R{\'e}gis, et~al.
\newblock Optimal rates for plug-in estimators of density level sets.
\newblock \emph{Bernoulli}, 15\penalty0 (4):\penalty0 1154--1178, 2009.

\bibitem[Rinaldo \& Wasserman(2010)Rinaldo and
  Wasserman]{rinaldo2010generalized}
Rinaldo, Alessandro and Wasserman, Larry.
\newblock Generalized density clustering.
\newblock \emph{The Annals of Statistics}, pp.\  2678--2722, 2010.

\bibitem[Rodriguez \& Laio(2014)Rodriguez and Laio]{rodriguez2014clustering}
Rodriguez, Alex and Laio, Alessandro.
\newblock Clustering by fast search and find of density peaks.
\newblock \emph{Science}, 344\penalty0 (6191):\penalty0 1492--1496, 2014.

\bibitem[Singh et~al.(2009)Singh, Scott, Nowak, et~al.]{singh2009adaptive}
Singh, Aarti, Scott, Clayton, Nowak, Robert, et~al.
\newblock Adaptive hausdorff estimation of density level sets.
\newblock \emph{The Annals of Statistics}, 37\penalty0 (5B):\penalty0
  2760--2782, 2009.

\bibitem[Sriperumbudur \& Steinwart(2012)Sriperumbudur and
  Steinwart]{sriperumbudur2012consistency}
Sriperumbudur, Bharath and Steinwart, Ingo.
\newblock Consistency and rates for clustering with dbscan.
\newblock In \emph{Artificial Intelligence and Statistics}, pp.\  1090--1098,
  2012.

\bibitem[Steinwart(2011)]{steinwart2011adaptive}
Steinwart, Ingo.
\newblock Adaptive density level set clustering.
\newblock In \emph{Proceedings of the 24th Annual Conference on Learning
  Theory}, pp.\  703--738, 2011.

\bibitem[Tsybakov et~al.(1997)]{tsybakov1997nonparametric}
Tsybakov, Alexandre~B et~al.
\newblock On nonparametric estimation of density level sets.
\newblock \emph{The Annals of Statistics}, 25\penalty0 (3):\penalty0 948--969,
  1997.

\bibitem[Van Der~Vaart \& Wellner(2009)Van Der~Vaart and Wellner]{van2009note}
Van Der~Vaart, Aad and Wellner, Jon~A.
\newblock A note on bounds for vc dimensions.
\newblock \emph{Institute of Mathematical Statistics collections}, 5:\penalty0
  103, 2009.

\bibitem[Vedaldi \& Soatto(2008)Vedaldi and Soatto]{vedaldi2008quick}
Vedaldi, Andrea and Soatto, Stefano.
\newblock Quick shift and kernel methods for mode seeking.
\newblock \emph{Computer vision--ECCV 2008}, pp.\  705--718, 2008.

\bibitem[Verdinelli \& Wasserman(2018)Verdinelli and
  Wasserman]{verdinelli2018analysis}
Verdinelli, Isabella and Wasserman, Larry.
\newblock Analysis of a mode clustering diagram.
\newblock \emph{arXiv preprint arXiv:1805.04187}, 2018.

\bibitem[Vinh et~al.(2010)Vinh, Epps, and Bailey]{vinh2010information}
Vinh, Nguyen~Xuan, Epps, Julien, and Bailey, James.
\newblock Information theoretic measures for clusterings comparison: Variants,
  properties, normalization and correction for chance.
\newblock \emph{Journal of Machine Learning Research}, 11\penalty0
  (Oct):\penalty0 2837--2854, 2010.

\bibitem[Wasserman et~al.(2014)Wasserman, Azizyan, and
  Singh]{wasserman2014feature}
Wasserman, Larry, Azizyan, Martin, and Singh, Aarti.
\newblock Feature selection for high-dimensional clustering.
\newblock \emph{arXiv preprint arXiv:1406.2240}, 2014.

\end{thebibliography}
\bibliographystyle{icml2018}

%\newpage
%{
%\onecolumn
%\appendix
%\input{Appendix.tex}
%}

\end{document}